\newtheorem{theorem}{Theorem}
\newtheorem{lemma}[theorem]{Lemma}
\theoremstyle{definition}
\newtheorem{definition}{Definition}[section]
\newcommand\upfl{{OD-PFL}}
\newcommand\upflhn{ODPFL-HN}%\upfl{}-HN}
\newcommand{\myE}{{\mathcal{E}}}
\newcommand{\myH}{{\mathcal{H}}}
\newcommand{\allH}{$H \in \myH$ }
\newcommand{\ah}{hypothesis $h \in H$}
\DeclareMathOperator*{\argmax}{arg\,max}
\title{On-Demand Unlabeled Personalized Federated Learning}
\author {
    % Authors
    Ohad Amosy,\textsuperscript{\rm 1}
    Gal Eyal, \textsuperscript{\rm 1}
    Gal Chechik \textsuperscript{\rm 1 \rm 2}
}
\begin{document}

\maketitle

\begin{abstract}
In \textit{Federated Learning} (FL), multiple clients collaborate to learn a shared model through a central server while keeping data decentralized. Personalized Federated Learning (PFL) further extends FL by learning a personalized model per client. In both FL and PFL, all clients participate in the training process and their labeled data are used for training. However, in reality, novel clients may wish to join a prediction service \textbf{after it has been deployed}, obtaining predictions for their own \textbf{unlabeled} data. 
% \newline 
Here, we introduce a new learning setup, \textit{On-Demand Unlabeled PFL} (\upfl{}), where a system trained on a set of clients, needs to be later applied to novel unlabeled clients at inference time.  We propose a novel approach to this problem, \upflhn{}, which learns  to produce a new model for the late-to-the-party client. Specifically, we train an encoder network that learns a representation for a client given its unlabeled data. That client representation is fed to a hypernetwork that generates a personalized model for that client. Evaluated on five benchmark datasets, we find that \upflhn{} generalizes better than the current FL and PFL methods, especially when the novel client has a large shift from training clients. We also analyzed the generalization error for novel clients, and showed analytically and experimentally how novel clients can apply differential privacy.% to protect their data. 
\end{abstract}

\section{Introduction}
\label{sec:introduction}
Federated Learning (FL) is the task of learning a model over
multiple disjoint local datasets, while keeping data decentralized \citep{mcmahan2017communication-FEDAVG}. 
Personalized Federated Learning (PFL) \citep{zhao2018federated-i80} extends FL to the case where the data distribution varies across clients. PFL has numerous applications from a smartphone application that wishes to improve text prediction without uploading user-sensitive data, to a consortium of hospitals that wish to train a joint model while preserving the privacy of their patients.  
% PFL successfully trains a collaborative model that is better than the model each client could train alone.
Current PFL methods assume that all clients participate in training and that their data is labeled 
% \citep{shamsian2021personalized, hsu2020federated, zhu2020federated-i83, yang2020federated, hanzely2020federated-i23}
, so once a model is trained, a novel client cannot be added.% without retraining the system.

In many cases, however, a federated model has been trained and deployed, but then novel clients wish to join. Often, such novel clients do not have labeled data, and their data distribution may shift from that of training clients.
This is the case, for example, when a speech recognition federated model has been deployed and needs to be applied to new users or when a virus diagnostic has been developed for some regions or countries, and then needs to be applied to new populations while the virus spreads. 
This learning setup poses a hard challenge to existing approaches. Non-personalized FL techniques may not generalize well to novel clients due to domain shift. PFL techniques learn personalized models but are not designed to be applied to a client that was not available during training.

\begin{figure}[]
    \centering
    \includegraphics[width=0.80\linewidth]{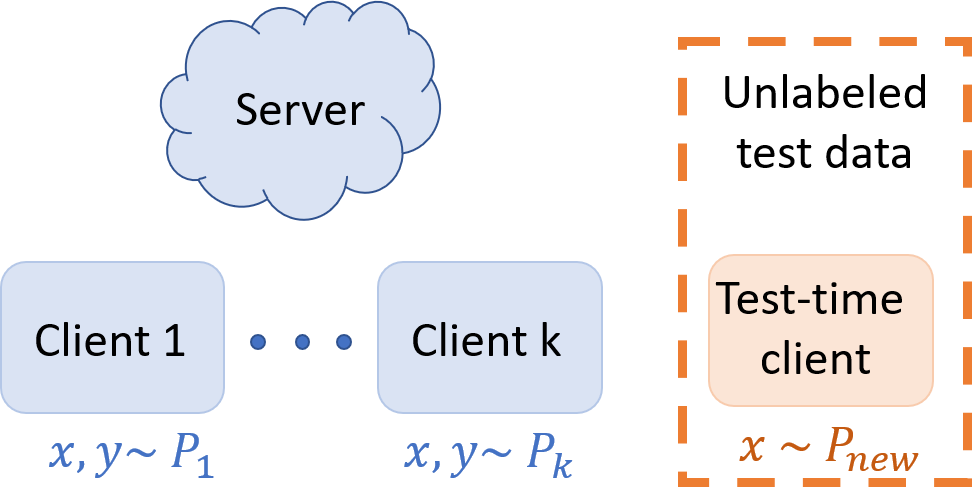}
    \caption{\textbf{The On-demand PFL problem.} A set of $k$ clients $c_1,\ldots,c_k$ are available for training a PFL model, each with their own distribution $P_i(x,y)$. After training is completed, a new client appears, with its own distribution $P_{new}$ over unlabeled data. The goal is to create a model $f_{new}$ that minimizes the loss over the new client data $l(y, f_{new}(x))$.  
    }
    \label{fig:setup}
\end{figure}

When a novel client joins with its \textit{labeled} data, there are various strategies to adapt the pre-trained model to the novel client. For instance, a FL model can be fine-tuned using those labels.
% \cite{fan2021federated}
For PFL, it is less clear which personalized model should be fine-tuned. 
\citet{shamsian2021personalized} used a hypernetwork (HN) that generates a personalized model for each client, given a descriptor of client labels. To generalize to a novel client with labeled data, they fine-tuning the descriptor using the labeled data through the hypernetwork. While all these approaches are useful, they cannot handle novel clients that have no labeled data.

% A different approach was proposed by \cite{shamsian2021personalized}. They train a hypernetwork that generates a personalized model for each client, given a descriptor of that client. To generalize to a novel client with labeled data, they find a descriptor for that client by fine-tuning it using the labeled data through the hypernetwork. While all those approaches are useful, they cannot handle novel clients that have no labeled data.

Here, we define a novel problem: performing federated learning on novel clients with unlabeled data that are only available at inference time. 
We call this setup \textbf{\upfl{}} for \textit{On-Demand Unlabeled Personalized Federated Learning}. 
We propose a novel approach to this problem, called \upflhn{}. During training, our architecture learns a space of personalized models, one for each client, together with an encoder that maps each client to a point in that client space. All personalized models are learned jointly through an HN, allowing us to combine personalized data effectively. At inference time, a novel client can locally compute its own descriptor using the client encoder. Then, it sends the descriptor to the server as input to the HN and obtains its personalized model.%, which can then be applied locally to its own data.

A key question remains for this approach to succeed: How to compute a descriptor of a novel unlabeled client? A key idea is to define a \textit{client encoder} that maps a dataset into a %dense
descriptor, and train it jointly with the HN. We explore the properties that this encoder should have. First, its architecture should implement a function that is invariant to permutations over its inputs \citep{zaheer2017deep}. Furthermore, if objects in the data adhere to their own symmetries, such as images, graphs, or point clouds, the encoder architecture should also be invariant to these symmetries  \citep{maron2020learning-DSS}. To the best of our knowledge, this is the first paper that discusses learning such invariant descriptors of datasets or clients. 

FL is motivated by privacy, but was shown to be vulnerable \citep{mammen2021federated}. In the \upfl{} setup, novel clients do not expose data or gradients, so they can better control their privacy. We show theoretically how differential privacy (DP) can be applied effectively to a new client and then experimentally measured how DP affects the accuracy of the personalized model.

This paper makes the following contributions: (1) A new learning setup, \upfl{}, learning a personalized model to \textbf{novel unlabeled clients at inference time}. 
(2) A new approach, learn a space of models using an encoder that maps an unlabeled client to that space, and an architecture \upflhn{} based on hypernetworks.
(3) A generalization bound based on multitask learning and domain adaptation, and analysis of differential privacy for a novel client. 
(4) Evaluation on five benchmark datasets, 
% \textit{CIFAR10}, \textit{CIFAR100}, \textit{iNaturalist}, \textit{Landmarks}, and \textit{Yahoo answers} 
showing that \upflhn{} performs better than or equal to the baselines. 

\section{Related work}
% \gal{Need to update refs here. Papers are somewat old. Add 2022 and late 2021 papers. Remove older ones unless they  are important. Overall, there are a lot of papers that it is not clear why we cite them. Not directly related to ours. Are we jsut trying to cover the full FL literature: ?)}

\label{sec:related_work}
\textbf{Federated learning (FL)}.
In FL, clients collaboratively solve a learning task, while preserving data privacy and maintaining communication efficiency.
By collaborating through FL, clients leverage the shared pool of knowledge from other clients in the federation, and can better handle data scarcity, low data quality, and unseen classes. 
% For example, in FedAvg \citep{mcmahan2017communication-FEDAVG}, each client uses its own data to train a model locally and sends it to the server. The server averages all local models into a global model.
The literature on FL is vast and cannot be covered here. 
%includes many research areas: improving robustness to statistical diversity, preserving client privacy, improve the FL convergence, reducing communication cost, and more. 
We refer the reader to recent surveys \citep{abdulrahman2020survey, zhang2021survey}.
%%%%%%%%%%%%%%%%
We note that \cite{lu2022federated} addresses clients with unlabeled data but does not address novel (test-time) clients, which is the main focus of this paper. 
%%%%%%%%%%%%%%

% % Several optimization methods have been proposed for improving convergence in FL \citep{sahu2018convergence-fedprex,lin2018don-i39,stich2018local-i64, wang2018cooperative-i70}.
% Other approaches focus on preserving client privacy 
% % \citep{duchi2014privacy-i16, mcmahan2017learning-i46, agarwal2018cpsgd-i3, zhu2020federated-i83}
% , improving robustness to statistical diversity
% % \citep{haddadpour2019convergence-i22,hanzely2020federated-i23,hsu2019measuring-i26, karimireddy2020scaffold-i29,zhao2018federated-i80, zhou2017convergence-i81}
% , and reducing communication cost.
% % \citep{reisizadeh2020fedpaq-i55,dai2019hyper-i10}. 
% These methods aim to learn a global model across clients, which limits their ability to deal with heterogeneous clients.
% The literature on federated learning is vast, we refer the reader to recent surveys \citep{abdulrahman2020survey, zhang2021survey}.
% \gal{there are 3-4 surveys. Use them and replace the list of random papers we cited below} 

\textbf{Personalized Federated Learning (PFL)}.
FL methods learn a single global model across clients, and this limits their ability to deal with heterogeneous clients. In contrast, PFL methods are designed to handle heterogeneity of data between clients by learning multiple models. \cite{tan2022towards} separates PFL methods into data-based and model-based approaches. Data-based PFL approaches aim to smooth the statistical heterogeneity of data among different clients, by normalizing the data \cite{duan2021self} or by selecting a subset of clients with minimal class imbalance \cite{wang2020optimizing}. 
% \citep{duan2021self} normalizes the data.
% \cite{wang2020optimizing} selects a subset of clients for each training round, that maximizes accuracy and minimizes communication. 
% \cite{yang2020federated} selects a subset of clients with minimal class imbalance. 
Model-based PFL approaches adapt to the diversity of data distributions across clients.
% \cite{achituve2021per} proposes learning a single kernel function shared by all clients, but uses a personalized Gaussian processes classifier for each client.
% \cite{liang2020think} uses a global model, and a local model on top of it, that does not share parameters with the server. 
% \cite{huang2021personalized} regularized collaboration amongst clients with similar data distributions. 
As an example, a server may learn a global model and share it with all the clients. Then, each client learns its own local model on top of the global model. %Another option is to regularize collaboration amongst clients with similar data distributions. 
We refer the reader to recent surveys \citep{kulkarni2020survey-pfl, tan2022towards}.

\textbf{Novel labeled clients}.
Several recent studies proposed to create personalized models for a novel \textbf{labeled} client.
\cite{shamsian2021personalized} used an HN to produce personalized models for training clients. Given a novel client, they tuned the embedding layer of the HN using client labels. In \cite{marfoq2021federated-unseen}, each client interpolates a global model and a local kNN model is produced using the labels of the novel clients. 
\cite{pmlr-v162-marfoq22a-unseen} modeled each client as a mixture of distributions using an EM-like algorithm. A new labeled client uses its labels to calculate its own personalized mixture.
All these methods depend on having access to labels of the novel client hance do not apply to the \upfl{} setup. 
In Sec. \ref{sec:baselines}, we test 3 different ways to apply existing PFL models to our setup and show that \upflhn{} outperforms all these variants. 

Adapting a model to a new distribution during inference can be viewed as a variant of domain adaptation (DA) \citep{wang2020tent,kim2021domain,liang2020we}.
Our on-demand setup can be viewed as an adaptation of DA to FL. 
We emphasize that DA is fundamentally different from FL, since in FL the data is distributed across different clients. As a result, DA approaches cannot be applied directly to the FL setup. To the best of our knowledge, this is the first paper to address test-time adaptation in a FL setup.

\textbf{Differential privacy (DP)}.
The goal of DP is to share information about a dataset but to avoid sharing information about individual samples in the dataset. Although privacy is a key motivation of PFL, private information is exposed in the process:
an adversarial client can infer the presence of exact data points in the data of other clients, and under certain conditions even generate the data of other clients. See a survey for more details \citep{mothukuri2021survey}. 
% \cite{melis2019exploiting} shows that an adversarial client can infer the presence of exact data points in other clients' data. \cite{hitaj2017deep} shows that an adversarial client can generate other clients' data. Several works utilizes DP to protect clients privacy \citep{xie2018differentially,augenstein2019generative, zhu2019applying,hao2019towards,truex2019hybrid}. 
A natural solution is to use DP to protect client privacy \citep{el2022differential-DP-survey}. However, DP adds noise to the training process and may harm the model performance. Here, we focus on the privacy of a novel client at test time, so the trained models remain untouched, and each novel client can choose its own privacy-accuracy trade-off at inference time.

\section{The Learning Setup }
\label{sec:Method}
We now formally define the learning setup of \upfl{}. 
We follow the notation in \citep{baxter2000model}. Let $X$ be an input space and $Y$ an output space. $P$ is a probability distribution over the data $X \times Y$. Let $l$ be a loss function $l: Y \times Y \rightarrow R$.
$\myH$ is a set of hypotheses with $h: X\rightarrow Y$. The error of a hypothesis $h$ over a distribution $P$ is defined by $err_{P}(h) = \int_{X \times Y} l(h(x),y) dP(x,y)$.

In \upfl{}, we train a federation of $N$ clients $c_1,\ldots,c_N$, and other, novel, clients are added at inference time. For simplicity of notation, we consider a single novel client $c_{new}$. 
Let $\{P_i\}_{i=1}^{N}$ be the data distributions of training clients, and $P_{new}$ the data distribution of the novel client $c_{new}$. Each training client has access to $m_i$ IID samples from its distribution $P_i$, denoted as $S_i=\{(x_j^i,y_j^i)\}_{j=1}^{m_i}$. 

The goal of \upfl{} is to use data from training clients $\{S_i\}_{i=1}^{N}$ to learn a mechanism that can assign a hypothesis $h_{new}\in \myH$ when given unlabeled data from a novel client $S_{new}=\{x_j^{new}\}_{j=1}^m$. This hypothesis should minimize the expected error of the novel client. For any distribution of its data $P_{new}$, that error is defined by $err_{P_{new}}(h_{new})= \int_{X \times Y} l(h_{new}(x),y) dP_{new}(x,y)$.

\section{Our approach}

We introduce \upfl{} to address the challenge of producing an ``on-demand" personalized model for new unlabeled clients after a federated model has been deployed. 
This task is difficult because the data distribution of a novel client may differ from that of training clients and is unknown at training time. Also, the training clients are no longer available at inference time. To the best of our knowledge, these constraints were not considered in the FL setup before, and existing methods are not designed to handle this new setup. 
Generalizing to a novel client is hard for FL methods, because they do not specialize. It is hard for PFL methods, because no single model learned during training would not necessary fit a novel client. Even if it did, there is no clear way to select that model, because the novel client has no labels to use as a selection criteria.
% When a novel unlabeled client wishes to join the federation, it is not clear how to use the set of pre-trained models for that novel client. 
Here, we propose a meta-learning mechanism to produce a model that fits the new distribution of unlabeled data.

A natural candidate for such a meta-mechanism would be hypernetworks (HNs). HNs are neural networks that output the weights of another network and can therefore be used to produce ``on-demand" models. 
Since the weights of the generated model are a (differentiable) function of the HN parameters, training the HN is achieved simply by propagating gradients from the generated (client) model. 
To generate a model for a client, the HN should be fed with a descriptor that summarizes the client dataset. 
Here we propose to learn a \textit{client encoder} that takes as input the unlabeled data of a client and produces a dense descriptor. Figure \ref{fig:architecture} illustrates our approach. Each client feeds its input samples to a client encoder that produces an embedding vector. Then an HN takes the embedding and produces a personalized model. During training, the client uses its labels to tune the personalized model and back-propagates the gradients to the HN and the client encoder. We now describe our approach in detail. 

\begin{figure}[t]
    \centering
    \includegraphics[width=1\linewidth]{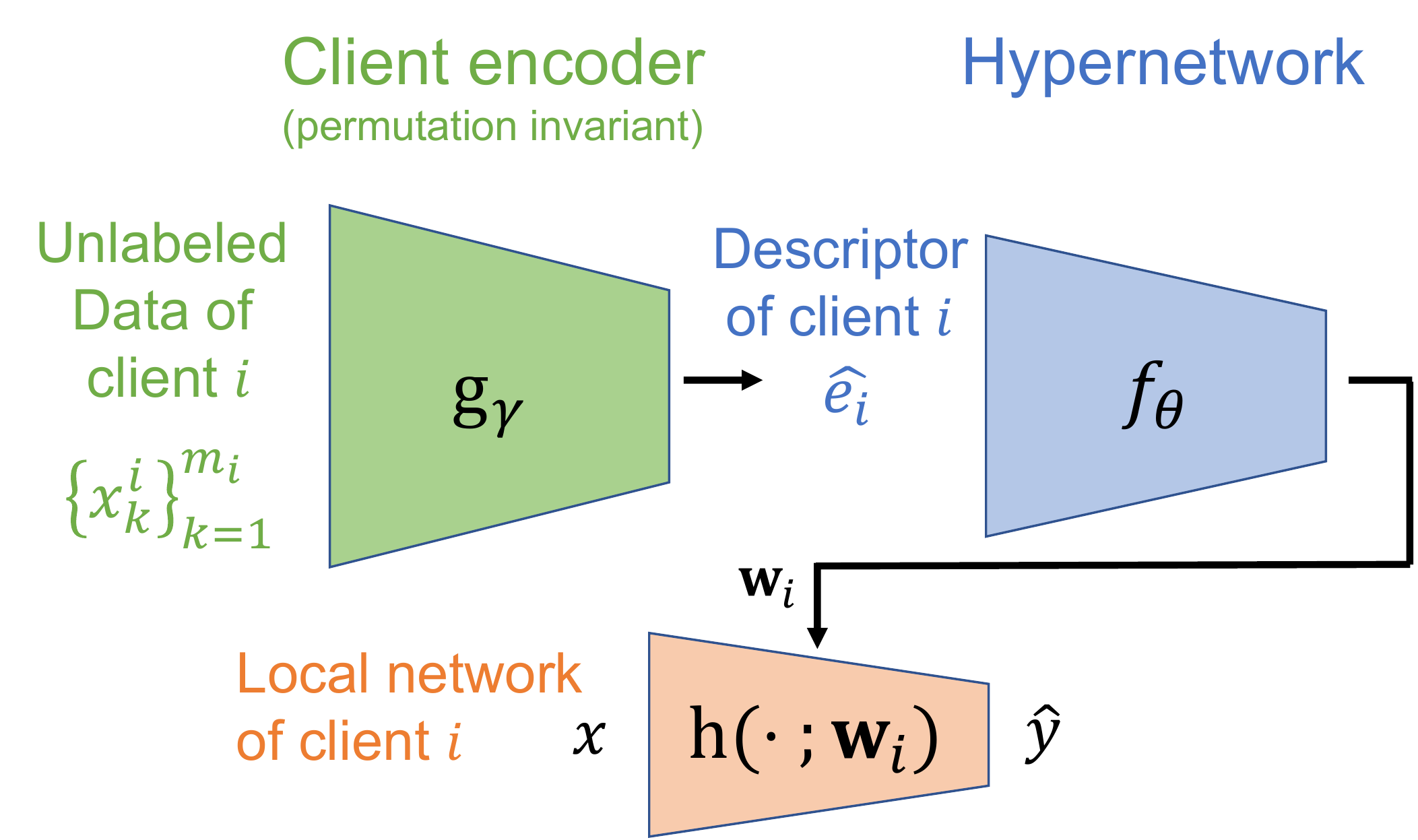}
    \caption{\textbf{The components of \upflhn{}.} Given a client $c_i$ with $m_i$ unlabeled samples, the client encoder $g_{\gamma}$ produces an embedding $\hat{e}_i$. Then, the HN $f_{\theta}$ predicts weights $w_i$ for the local model $h_i$ of client $i$. 
    }
    \label{fig:architecture}
\end{figure}

\noindent\textbf{Client encoding.}
The goal of client encoding is to map an entire data set into a dense descriptor. Formally, 
it maps unlabeled samples $S_i = \{x_j^i\}_{j=1}^{m_i}$ of client $i$ to a descriptor $e_i$ embedded in a representation space ${\myE{}}$. In essence, the encoder is expected to map similar datasets to nearby descriptors in a way that balances personalization -- of unique clients, with generalization -- across similar clients. 

For this mapping to be effective, it should obey several properties. First, the embedding should be inductive and generalizing, in the sense that the embedding function would later be applied to a novel client at test time and should generalize to that client. Second, since data samples form an unordered set, we wish that the encoder obeys the set symmetries of the data and is invariant to permutations over samples. Third, the encoder should capture the full data set. We now discuss our design decisions when building the encoder. 

% recall that the descriptor is used by the HN to generate a model. Therefore, the representation must be tuned so that it is well consumed by the following HN.

First, with respect to generalization. %At first, 
One may be tempted to explicitly regularize the representation such that similar datasets are mapped to close vectors in the embedding space. However, note that the descriptors are consumed by the downstream HN. 
Therefore, training the two networks jointly, while regularizing their parameters, should yield a representation that generalizes across clients. This is because the encoder tunes the representation to fit the downstream HN.
% This means that training the two networks jointly, while regularizing their parameters, is expected to yield a representation that generalizes across clients, because the encoder tunes the (spatial) smoothness of the representation to fit the smoothness of the downstream HN.

Second, with respect to permutation invariance, we tested three architectures. (1) \textit{DeepSet} (DS) \citep{zaheer2017deep}.  In DS, each data point is fed to the same ("siamese") model and produces a feature vector. Then, an invariant pooling operator (usually mean) is applied to all outputs, and then processed by a second model, yielding the final descriptor. 
(2) DS is invariant to permutations over input samples, but it does not take into account symmetries of each element itself, such as translation invariance in images. We use \textit{Deep Sets for symmetric elements} (DSS) \citep{maron2020learning-DSS} to handle symmetries at both the set level and the element level. (3) DS and DSS uniformly aggregate information from all individual elements in the set. Sometimes, it is beneficial to consider several elements together when computing the descriptor. 
To capture sample-to-sample interactions, we used a \textit{set transformer} (ST) \citep{lee2019set-ST}. ST uses attention to aggregate representations of all elements into a single descriptor. The weight of each element is determined by the context of other elements in the dataset. We treated the architecture as a hyperparameter and selected it using the validation set.

Finally, with respect to describing the full dataset. The simplest approach is to use large batches that contain the entire dataset as input to the client encoder. We also tested an alternative approach that can be applied to large datasets that do not fit in a single batch in memory. In these cases, we split the data into smaller batches, encoded each batch, and used the average over batch descriptors as the final descriptor. Note that this resembles a DeepSet architecture and that when batches are sampled uniformly at random, would obey in expectation invariance to input permutation.

\noindent\textbf{Hypernetworks.}
%%%%%%%%%%%%%%%%
Our goal is to create a personalized model for a new client at inference time. 
% Since the novel client was not accessible during training, the personalization that based on the new client unlabeled data should accrue at inference time \gal{I dont understand this sentence and paragraph}. 
Assuming that the client encoder summarized all relevant information to create such a personalized model. A natural solution is to learn a mapping from such descriptors to personalized models, and apply it to the descriptor of a novel client. This is exactly what HNs are designed for.

%%%%%%%%%%%%%%%

% Assume first that there already exists a representation space ${\myE{}}$ of clients, and a client $i$ is mapped by a client encoder $g_{\gamma}$ to a representation $e_i \in \myE{}$. This representation is fed as input to the hypernetwork, which then produces a personalized model for the client. We explain below how the representation is learned. 

An HN $f_{\theta}$ parameterized by $\theta$ embodies a mapping from a client-embedding space to a hypotheses space $f_{\theta}:\myE{} \rightarrow H$. A client with an embedding vector $e_i$ is mapped by the HN to a personalized model $h_i =h(\cdot;w_i)$, with $w_i=f_{\theta}(e_i)$.

\begin{algorithm}[t!]

\caption{Training On-demand PFL-HN}\label{alg:cap}

\begin{algorithmic}
\algblockdefx[SERVER]{Server}{Serverend}%
    {\textbf{Server executes:}}
    
\Server
    \State {initialize $\theta, \gamma$}
    
    \For {each round $t=1,2,...$}
    
        \State {$i \gets$ select a random client out of $N$ clients}
        \State {$e_i \gets \textsc{ClientEncoding}(g_{\gamma})$}
        \State $w_i \gets f_{\theta}(e_i)$ \Comment{server computes a personal model}
        \State {$\Delta w_i \gets \textsc{ClientUpdate}(w_{i})$}
        \State {apply chain rule to obtain $\Delta \theta$ and $\Delta e_i$ from $\Delta w_i$} 
        \State $\theta \gets \theta - \eta \Delta \theta$
        \State {$\Delta \gamma \gets \textsc{ClientBackprop}(\Delta e_i)$}
        \State $\gamma \gets \gamma - \eta \Delta\gamma$
    \EndFor
\Serverend
    
\Function{\textbf{ClientEncoding}}{$g_\gamma$} //Run on client $i$
    \State {$e_i \gets g_{\gamma}(\{x_k\}_{k=1}^{m_{i}})$}  \Comment{client $i$ computes its embedding}
    \State \Return $e_i$ to server
\EndFunction
\\
\Function{\textbf{ClientUpdate}}{$w_i$} //Run on client $i$
    \State {$\mathcal{B} \gets$ split $\{x_k\}_{k=1}^{m_{i}}$ into batches} 
    \State {$w^{new}_i \gets w_i$} 
    \For {each local epoch $e$ from $1$ to $E$}
        \For {each batch $b\in\mathcal{B}$}
            \State {update $w^{new}_i$ using $l(h(\cdot;w^{new}_i);b)$} \Comment{training}
        \EndFor
    \EndFor
    \State $\Delta w_i \gets w^{new}_i - w_i$
    \State \Return $\Delta w_i$ to server
\EndFunction
\\
\Function{\textbf{ClientBackprop}}{$\Delta e_i$} //Run on client $i$
    \State {apply chain rule to obtain $\Delta \gamma$ from $\Delta e_i$}
    \State \Return $\Delta\gamma$ to server
\EndFunction
\end{algorithmic}
\label{alg:training}
\end{algorithm}

\noindent\textbf{Training.}
The client encoder and the HN are trained jointly. They produce a client descriptor and a personalized model for every client by optimizing the following loss 
\begin{equation}
    L(\theta,\gamma) = \sum_{i=1}^{n} \sum_{j=1}^{m_i}l\bigg(f_{\theta}\Big(g_{\gamma}(\{x_j^i\}_{j=1}^{m_i})\Big)\Big(x_j^i\Big),y_j^i\bigg)
\end{equation}
using training clients (labeled) $c_1, \ldots, c_N$, where $l$ is a cross-entropy loss.

\noindent\textbf{Workflow.}
Algorithm \ref{alg:training} shows the workflow of \upflhn{}. During training,
in each communication step, we repeat these 4 steps: (1) The server selects a random client and sends to it the current encoder $g_{\gamma}$. (2) The client locally predicts its embedding $e_i$ and sends it back to the server. (3) Using its embedding, the server uses an HN $f_{\theta}$ to generate a customized network $h_i=h(\cdot;w_i)$ and communicates it to the client. (4) The client then locally trains that network on its data and communicates back to the server the delta between the weights before and after training. Using the chain rule, the server can train the hypernetwork and the encoder.

At inference time, (1) the server sends the encoder to the novel client. (2) The client uses the encoder to calculate an embedding $e_{new}$ and sends it to the server. (3) The server uses the HN to predict the personalized model of the client $h_{new}=h(\cdot;w_{new})$ from the embedding and sends the result to the client. The client then applies its personalized model locally without revealing its data. 

\section{Generalization bound}
The data of the new client and the clients of the federation may be sampled from different distributions. In the general case, there is no guarantee that learning a model for labeled clients would lead to a good model for a novel client.
We now show that under reasonable assumptions, previous bounds developed for multitask learning (MTL) and for domain adaptation (DA), can be applied to the \upfl{} setup, to bound the generalization error of the novel client. 

Intuitively, the bound has two terms; one captures the domain shift, and the other captures the generalization error.

\begin{theorem}
\label{theorem:generalization-bound}
Let $\myH$ be a hypothesis space, $P_{new}$ be a data distribution of a novel client, and $Q$ be a distribution over the distributions of the clients, that is, $P_i$ is drawn from $Q$. $\hat{err}_{z}(H)$ is the empirical loss over the training-client data and is defined in detail in the Appendix \ref{app:generalization-blund}.

The generalization error of a novel client is bounded by
%\begin{equation}
%\label{eq:new_client_bound}
$err_{P_{new}}(H) \le \hat{err}_{z}(H) +\epsilon + \frac{1}{2} \int_{P} \inf_{h \in H} \hat{d}_{H\Delta H}(P,P_{new}) dQ(P)$.
%\end{equation}
Here, $\epsilon$ is an approximation error of a client in the federation from Theorem 2 in \cite{baxter2000model}
%eq. (\ref{eq:baxster}), 
and $\hat{d}_{H\Delta H}$ is a distance measure between the probability distributions defined in  \cite{ben2010theory}.
%%%%%%%%%%%%%%%%
% The error $err_{Q}(H)$ and the empirical error $\hat{err}_{z}(H)$ are defined in appendix \ref{app:generalization-blund}.
\end{theorem}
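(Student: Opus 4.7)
The plan is to derive Theorem \ref{theorem:generalization-bound} by chaining two classical results: the domain-adaptation bound of \cite{ben2010theory}, to bridge from $P_{new}$ to an arbitrary training distribution $P$ drawn from $Q$, and the multitask learning bound of \cite{baxter2000model} (specifically its Theorem 2), to bridge from the expected-task error under $Q$ to the empirical loss $\hat{err}_z(H)$. The two-term structure advertised in the statement (``one captures the domain shift, the other the generalization error'') maps cleanly onto these two invocations, and the $\epsilon$ appearing in the bound is exactly Baxter's sample-complexity penalty.

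First, I would fix a source distribution $P$ and apply the Ben-David inequality: for every hypothesis $h \in H$,
\[
err_{P_{new}}(h) \le err_P(h) + \tfrac{1}{2}\,\hat{d}_{H\Delta H}(P, P_{new}) + \lambda(P,P_{new}),
\]
where $\lambda(P,P_{new}) = \inf_{h' \in H}\bigl[err_P(h') + err_{P_{new}}(h')\bigr]$ is the ideal joint error inside $H$, and the empirical divergence $\hat{d}_{H\Delta H}$ approximates the population divergence up to a standard VC-type correction that I would absorb into $\epsilon$. Since the last two summands are independent of $h$, taking $\inf_{h\in H}$ on both sides and using the Baxterian convention $err_P(H) := \inf_{h\in H} err_P(h)$ produces a per-$P$ bound on $err_{P_{new}}(H)$. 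Because the left-hand side is $P$-free, integrating this inequality against $Q$ gives
\[
err_{P_{new}}(H) \le \int err_P(H)\,dQ(P) + \tfrac{1}{2}\int \hat{d}_{H\Delta H}(P,P_{new})\,dQ(P) + \int \lambda(P,P_{new})\,dQ(P),
\]
where the final $\lambda$-integral plays the role of the $\inf_{h \in H}$ factor that appears inside the divergence integral in the theorem statement. I would then invoke Baxter's Theorem 2: with high probability over the sampling of clients and their local data, the first integral, being exactly the expected-task error under $Q$, is upper-bounded by $\hat{err}_z(H) + \epsilon$, where $\epsilon$ is a covering-number penalty depending on $N$ and on the per-client sample sizes $m_i$. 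Substituting this into the previous display yields the three-term bound claimed in Theorem \ref{theorem:generalization-bound}.

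The hard part will be ensuring that a single hypothesis class $H$ carries both roles simultaneously: in Baxter's bound $H$ is the shared per-client model class over which each task is solved, whereas in Ben-David's bound $H$ determines both the learner and the discriminator used to define $d_{H\Delta H}$. For \upflhn{} the natural candidate is the image of the hypernetwork, $\{h(\cdot;f_\theta(e)) : e \in \myE\}$, and one must verify that this class has finite VC/pseudo-dimension so that Baxter's $\epsilon$ and the empirical-to-population gap of $\hat{d}_{H\Delta H}$ can be controlled uniformly over $P \sim Q$. A secondary subtlety is that only finitely many source distributions are observed, one per training client, so the outer $Q$-integral has to be replaced by its empirical average over the $N$ training clients, with an additional uniform-convergence correction that I would again fold into $\epsilon$ when stating the bound in its final form.
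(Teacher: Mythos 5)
Your proposal follows essentially the same route as the paper: apply the Ben-David $H\Delta H$ domain-adaptation inequality with $P_{new}$ as target and each $P\sim Q$ as source, integrate over $Q$, and then bound the resulting $err_Q(H)=\int\inf_{h\in H}err_P(h)\,dQ(P)$ by $\hat{err}_z(H)+\epsilon$ via Baxter's Theorem 2. You are in fact somewhat more careful than the paper, which explicitly assumes the ideal joint error $\lambda$ and the VC-type empirical-divergence correction are negligible rather than folding them into $\epsilon$, and which does not address the two concerns you raise about $H$ playing a dual role and about $Q$ being observed only through finitely many clients.
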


\begin{proof}
See Appendix \ref{app:generalization-blund} for a detailed proof. 
% The main idea of the proof is to first bound the error of all (labeled) training clients, using results from multi-task learning. Then, treat the novel client as a target domain in a domain adaptation problem, and bound its shift from training clients.  

% The error of the novel client for a given hypothesis space $H$ is $\inf_{h \in H} err_{P_{new}}(h)$. Since $P_{new}$ is independent of $Q$, we can integrate over all $P\sim Q$ and obtain
% \begin{equation}
% \label{eq:erT}
% \begin{split}
%     err_{P_{new}}(H) := \inf_{h \in H} err_{P_{new}}(h)\\
%     = \int_{P} \inf_{h \in H} err_{P_{new}}(h) dQ(P).
% \end{split}
% \end{equation}

% Using Theorem 2 from \cite{ben2010theory} %(\ref{eq:BD}), 
% with $P_{new}$ treated as the target domain and $P$ as the source domain, gives that $\forall h, \forall P : err_{P_{new}}(h) \le err_{P}(h) + \frac{1}{2} \hat{d}_{H\Delta H}(P,P_{new})$. Plugging into Eq.~(\ref{eq:erT}) gives
% \begin{eqnarray}
% \label{eq:erT2}
% \begin{split}
%     &err_{P_{new}}(H) \le \\
%     &\int_{P} \inf_{h \in H} \left[err_{P}(h) + \frac{1}{2} \hat{d}_{H\Delta H}(P,P_{new})\right] dQ(P) \\
%     &= err_{Q}(H)+\frac{1}{2} \int_{P} \inf_{h \in H} \hat{d}_{H\Delta H}(P,P_{new}) dQ(P).
% \end{split}
% \end{eqnarray}
% Since $err_{Q}(H)$ is unknown, we use Theorem 2 from \cite{baxter2000model} to bound the error of the novel client. That yields the bound in the theorem.
\end{proof}

\section{Experiments}
\label{sec:experiments}
% We evaluate \upflhn{} using five benchmark datasets. 

\subsection{Experiment setup and evaluation protocol} We evaluated \upflhn{} using five benchmarks using an experimental protocol designed for the \upfl{} setup where novel clients are presented to the server during inference. 

\textbf{Client split:} To quantify the performance of novel clients, we first randomly partition the clients to $N_{train}$ train clients and $N_{novel}$ novel clients. We use $N_{novel}=N/10$. Unless stated otherwise, we report average accuracy for the novel clients.
% : $\frac{1}{N_{novel}}\sum_{i}^{N_{novel}}\frac{1}{m_i}\sum_{j}^{m_i}Acc(\psi_i(x_j^i),y_j^i)$. Where $\psi_i$ is the model chosen by the server to evaluate novel client $i$ with $m_i$ samples.
To conduct a fair comparison, training is limited to 500 steps for all evaluated methods. In each step, the server communicates with a $0.1$ fraction of training clients following the protocol of each method. %Specific model architectures are described for each experiment separately.

\textbf{Sample split and HP tuning:} We split the samples of each \textit{training client} into a training set and a validation set. Validation samples were used for hyperparameter tuning. 
% including batch size, learning rate of each method, weight-decay values, the number of inner training epochs, and an early-stopping point. Results are reported using the best hyperparameters for each experiment. 
See Appendix \ref{appendix:exp-detail} for more details.

\subsection{Baselines}
\label{sec:baselines}
We evaluate using FL methods, which train a single global model, and PFL methods, which train one model per client. We compare the following FL methods:
\textbf{(1) FedAVG} \citep{mcmahan2017communication-FEDAVG}, where the parameters of local models are averaged %element-wise 
with weights proportional to the sizes of the client datasets.  
\textbf{(2) FedProx} \citep{sahu2018convergence-FEDPROX} adds a proximal term to the client cost functions, thereby limiting the impact of local updates by keeping them close to the global model. 
\textbf{(3) FedMA} \citep{wang2020federated} constructs the shared global model in a layer-wise manner by matching and averaging hidden elements with similar feature extraction signatures. For inference with FL methods, all novel clients are evaluated using the single global model.

Applying PFL methods to \upfl{} is not straightforward because \textbf{PFL methods are not designed to generalize to a novel unlabeled client.} They produce a model per training client, but it is not clear how to use these models for inference over a novel client. We tested three different ways to use PFL for inference with a novel client:
\noindent\textbf{(4) PFL-sampled:} Draw a trained client model uniformly at random. We evaluate this baseline by computing the mean accuracy of all personalized models on each novel client.
\noindent\textbf{(5) PFL-nearest:}
We used the training client model closest to the novel client. 
We measure the distance using A-distance \citep{ben2007analysis}, which can be calculated in a FL setup.
% which measures how hard it is to separate data points from two different clients. %using a linear model. 
% Since that model can get gradients from each client separately, it can be used in the FL setup.
\noindent\textbf{(6) PFL-Ensemble:} (4) and (5) use a model from one of the training clients as the new client model.
Here, we try a stronger baseline that uses all personalized models for a single novel client, by averaging the logits of all models for each prediction. In practice, this method is expensive in communication and computation costs.
All experiments used pFedHN \citep{shamsian2021personalized} to produce models for training clients.

\subsection{Results for CIFAR}
\label{sec:cifar}

\begin{table*}
\setlength{\tabcolsep}{4pt}
    \small
    % \scriptsize
    % \tiny
    \centering
    \caption{\textbf{Accuracy on novel unlabeled clients, CIFAR10 \& CIFAR100:} Values are averages and standard error across clients.}
    \vskip 0.11in
    \scalebox{0.90}{
    \begin{tabular}{l c c c c c c c c c}
    \toprule
    & \multicolumn{4}{c}{CIFAR-10} && \multicolumn{4}{c}{CIFAR-100} \\ 
    \cmidrule{2-5}\cmidrule{7-10}
    
    % \begin{tabular}{l c c c c}
    % \toprule
    split & pathological & $\alpha=0.1$ & $\alpha=1$ & $\alpha=10$ && pathological & $\alpha=0.1$ & $\alpha=1$ & $\alpha=10$\\
    \midrule
    FedAvg & $50.3\pm2.9$ & $58.7\pm3.6$ & $48.4\pm2.9$ & $66.2\pm0.5$ && $16.2\pm1.3$ & $17.9\pm1.0$ & $13.5\pm1.7$ & $30.2\pm0.4$\\
    FedProx & $54.2\pm2.0$ & $53.9\pm2.2$ & $54.2\pm1.0$ & $52.8\pm0.6$ && $5.5\pm1.2$ & $15.9\pm0.7$ & $20.6\pm0.6$ & $12.4\pm0.6$\\
    FedMA & $42.9\pm1.8$ & $49.3\pm3.4$ & $54.5\pm0.9$ & $53.8\pm0.6$  && $11.2\pm0.7$ & $12.6\pm1.0$ & $6.5\pm 0.4$ & $ ~7.3\pm0.3$ \\
    \hline
    PFL-sampled & $24.8\pm1.0$ & $61.0\pm3.7$ & $49.4\pm3.0$ & $\mathbf{68.5\pm0.7}$ && $3.9\pm0.4$ & $13.5\pm0.5$ & $3.4\pm1.4$ & $32.4\pm0.1$ \\
    PFL-nearest & $24.4\pm6.2$ & $63.1\pm3.5$ & $49.4\pm0.9$ & $\mathbf{68.5\pm0.7}$ && $ 6.5\pm2.7$ & $14.3\pm0.6$ & $3.4\pm0.4$ & $32.1\pm0.2$\\
    PFL-ensemble & $47.6\pm3.2$ & $62.2\pm3.7$ & $49.4\pm3.0$ & $\mathbf{68.5\pm0.7}$ && $7.8\pm1.8$ & $20.4\pm1.2$ & $3.4\pm1.4$ & $32.7\pm0.2$\\
    \midrule
    % \upflhn{}  & $55.9\pm2.3$ & $61.3\pm4.5$ & $31.5\pm13.7$ & $67.9\pm0.7$ \\
    % \upflhn{} end-to-end & $54.8\pm7.5$ & $58.7\pm13.6$ & $-\pm-$ & $67.7\pm2.4$\\
    \upflhn{} (ours) & $\mathbf{59.5\pm3.5}$ & $\mathbf{66.0\pm3.0}$ & $\mathbf{62.9\pm1.0}$ & $68.1\pm0.5$ && $\mathbf{19.5\pm2.1}$ & $\mathbf{26.4\pm0.1}$ & $\mathbf{32.9\pm0.9}$ & $\mathbf{33.6\pm0.1}$\\
    \bottomrule
    \end{tabular}
    }
    \label{tab:results-cifar}
\end{table*}

We evaluate \upflhn{} using CIFAR10 and CIFAR100 \citep{krizhevsky2009learning} with two existing protocols.% to split the data across clients.

\noindent \textbf{(1) Pathological split:} As proposed by \cite{mcmahan2017communication-FEDAVG}, we sort the training samples by their labels and partition them into $N \cdot K$ shards. Then each client is randomly assigned $K$ of the shards. This results in $N$ clients with the same number of training samples and a different distribution over labels. In our experiments, we use $N=100$ clients, $K=2$ for CIFAR10 and $K=5$ for CIFAR100.

\noindent\textbf{(2) Dirichlet allocation:} We follow the procedure by \cite{hsu2019measuring} to control the magnitude of the distribution shift between clients. For each client $i$, samples are drawn independently with class labels following a categorical distribution over classes with a parameter $q_{i}\sim Dir(\alpha)$. Here, $Dir$ is the symmetric Dirichlet distribution.%, and $\alpha$ is the concentration parameter. 
We conduct three experiments for each of the two datasets with $\alpha\in{0.1, 1, 10}$. Smaller values of alpha imply larger distribution shifts between clients.

\textbf{Implementation details:} There are three different models in \upflhn{}: A target model, an HN, and a client encoder. \textbf{Target model:} We use a LeNet \citep{lecun1998gradient} with 
two convolutions and two fully connected layers. 
To assure a fair comparison, we use the same target model across all evaluated methods and baselines.
\textbf{Client encoder:} For the DS encoder, we use the same architecture as the target model, with an additional fully connected layer followed by pooling operations over batch dimension. These layers are added after each convolution layer and before the fully connected layers. For the DSS and the ST encoder we use the public implementation provided by the authors.
\textbf{Hypernetwork:} The HN is a fully connected network, with 3 hidden layers and linear head for each target weight tensor.

\textbf{Results:} Table \ref{tab:results-cifar} compares \upflhn{} to baselines on CIFAR-10 and CIFAR-100. \upflhn{} performs better than all baselines in all the evaluated scenarios, except for CIFAR-10 with the $\alpha=10$ split. 

\subsection{Results for iNaturalist}
\label{sec:inaturalist}
iNaturalist is a dataset for Natural Species Classification based on the iNaturalist 2017 Challenge \citep{Horn2018TheIS}. The dataset has 1,203 classes. Following \cite{hsu2020federated}, we evaluate \upflhn{} using two geographical splits of iNaturalist: iNaturalist-Geo-1k with 368 clients, and iNaturalist-Geo-300 with 1,208 clients. 

\textbf{Implementation details:} We use a MobileNetV2 \citep{sandler2018mobilenetv2} pre-trained on ImageNet 
to extract features for each image. The extracted feature vectors, of length 1280, are the input for both the target model and the client encoder.  \textbf{Target model:} The target model is a fully connected network with two Dense layers and a Dropout layer.  \textbf{Client encoder:} The client encoder has three fully connected layers with pooling operations after the first layer. HN implementation is the same as in Sec. \ref{sec:cifar}.

\textbf{Results:} Table \ref{tab:results-real} shows \upflhn{} outperforms current FL methods and adapted PFL methods.% in all evaluations with iNaturalist.

\subsection{Results for Landmarks}
Landmarks is based on the 2019 Landmark-Recognition Challenge \citep{49052}. 
% The dataset has 2,028 classes. 
Following \cite{hsu2020federated}, we divide the dataset into clients by authorship. The resulting Landmarks-User-160k split contains 1,262 clients with 2,028 classes. Implementation is as in Sec. \ref{sec:inaturalist}.

\textbf{Results:} Table \ref{tab:results-real} shows that \upflhn{} outperforms current FL methods in all evaluations. \upflhn{} outperforms the adapted PFL methods with the exception of PFL-ensemble where it ties. However, PFL-ensemble suffers from relatively large communication and computation costs compared to the proposed \upflhn{}.

\subsection{Results for Yahoo Answers}
Yahoo Answers is a question classification dataset \citep{zhang2015character-yahoo}, with 1.4 million training samples from 10 classes. We divide the data into 1000 clients using the Dirichlet allocation procedure with $\alpha=10$. 

\textbf{Implementation details:} We used BERT \citep{devlin-etal-2019-bert} to extract a 768-dimension feature vector for each sample. For more details, see Section \ref{sec:inaturalist}.

\begin{table}
\setlength{\tabcolsep}{2pt}
    % \small
    \small
    \centering
    \caption{\textbf{Accuracy on novel unlabeled clients for iNaturalist, Landmarks and Yahoo Answers.} Values are averages and SEMs across novel clients.}
    %\vskip 0.11in
    \scalebox{0.90}{
    \begin{tabular}{l c c c c c c}
    %\toprule
    & \multicolumn{2}{c}{iNaturalist} && \multicolumn{1}{c}{Landmarks} && \multicolumn{1}{c}{Yahoo Answers}\\ 
    \cmidrule{2-3}\cmidrule{5-5}\cmidrule{7-5}
    split & Geo-300 & Geo-1k && User-160k && User-1K\\
    \midrule
    FedAvg & $36.1\pm1.6$ & $36.9\pm1.1$ && $34.8\pm1.3$ && $27.6\pm0.1$\\
    FedProx & $17.4\pm0.8$ & $26.5\pm1.7$ && $13.8\pm1.0$ && $13.9\pm0.1$\\
    FedMA & $13.4\pm0.7$ & $17.5\pm0.8$ && $3.80\pm0.4$ && $25.0\pm0.1$\\
    \midrule
    PFL sampled & $25.6\pm1.4$ & $27.2\pm0.9$ && $37.4\pm1.3$ && $33.2\pm0.1$\\
    PFL nearest & $24.2\pm3.7$ & $26.9\pm4.6$ && $33.1\pm3.6$ && $13.2\pm0.1$\\
    PFL ensemble & $31.5\pm1.6$ & $36.6\pm1.2$ && $\mathbf{39.1\pm1.4}$ && $33.2\pm0.1$\\
    \midrule
    \upflhn{} & $\mathbf{37.5 \pm1.7}$ & $\mathbf{41.6\pm1.2}$ && $\mathbf{41.1\pm1.4}$ && $\mathbf{35.8\pm0.2}$\\
    % upfl{}-HN DSS (ours) & $\mathbf{34.04 \pm11.94}$ & $\mathbf{34.04\pm11.49}$ && $\mathbf{38.29\pm14.68}$ \\
    \bottomrule
    \end{tabular}
    }
    \label{tab:results-real}
\end{table}

\textbf{Results:} Table \ref{tab:results-real} shows that \upflhn{} outperforms current FL methods and adapted PFL methods.% in all evaluations with Yahoo Answers.

%\textbf{Results:} Table \ref{tab:results-real} shows \upflhn{} outperforms current FL methods and adapted PFL methods in all evaluations with iNaturalist and Yahoo Answers.  \upflhn{} ties with  PFL-ensemble  on the Landmarks dataset. However, PFL-ensemble suffers from relatively large communication and computation costs compared to the proposed \upflhn{}.

\subsection{How distribution shift affects generalization}

\begin{figure}[t]
    \centering
    \includegraphics[width=0.9\linewidth]{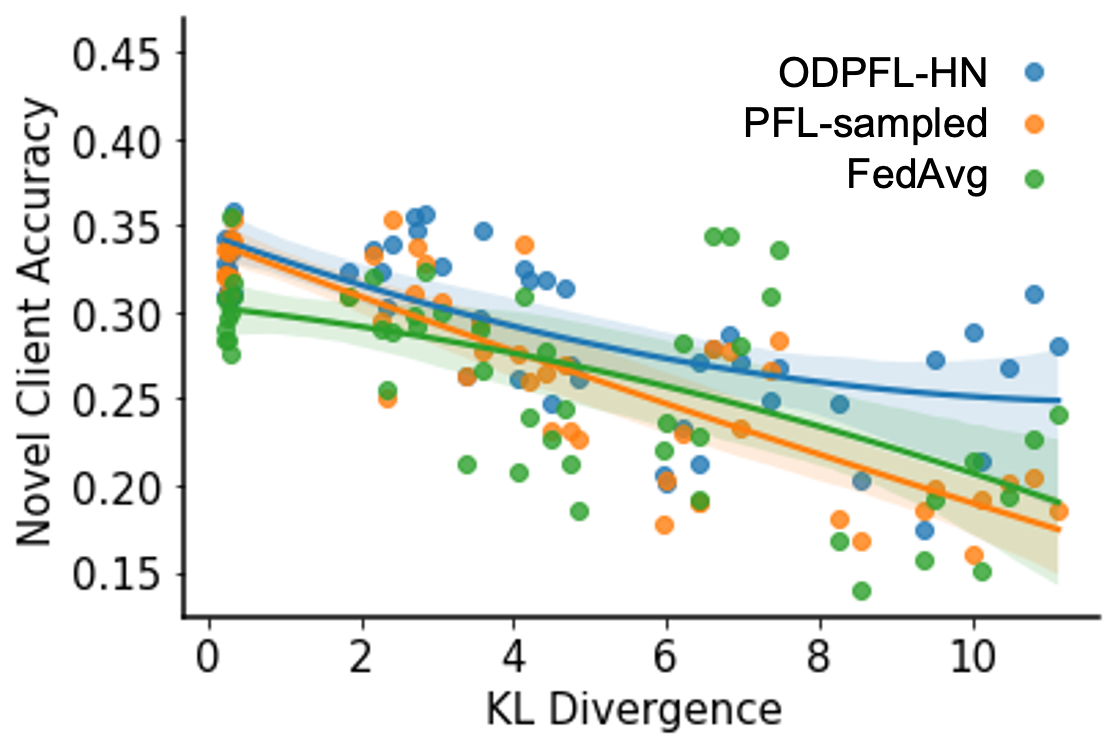}
    \caption{\textbf{Accuracy of novel clients vs. distribution shift between the novel client and training clients}. Shown results for CIFAR-100 across multiple splits to $N_{train}=90$ train clients and $N_{novel}=10$ novel clients using symmetric Dirichlet distributions with varying parameter $\alpha$ (see Sec. \ref{dist-shift}). Accuracies of novel clients are reported against the KL-divergence (over label distribution) from the nearest train client for each method.}
    \label{fig:cif100-dir}
\end{figure}

\label{dist-shift}
We expect \upflhn{} generalization to depend on the similarity between the novel client and the training clients. Novel clients that differ from training clients may perform poorly compared to clients that are similar to training clients. 
    
To quantify this effect, we generated clients at varying similarity levels by creating new splits of CIFAR-100 using Dirichlet allocation while varying $\alpha \in\{0.1, 0.25, 0.5, 1, 10\}$. To measure the similarity between a novel client and all training clients, we computed the empirical label distributions of each client and computed their KL-divergence from the novel client. Figure \ref{fig:cif100-dir} presents the accuracy of a novel client as a function of its $D_{KL}$ to the nearest train client. As expected, the accuracy decreases as $D_{KL}$  grows, for all evaluated methods. \upflhn{} demonstrates the most moderate decrease, achieving the highest accuracy in large distribution shifts.  

\subsection{Robustness to covariate shift}
Common benchmarks for PFL assume that different clients have a different distribution of labels. Here, we conduct an additional experiment to measure the effect of covariate shifts between clients.
We evaluated the robustness of \upflhn{} to a wide range of blur and rotation corruptions, using the CIFAR10 dataset. 
%Data from training clients were kept without corruption, while applying corruption on novel clients data.
We applied the corruptions to the data of the novel client, while the data of training clients was kept uncorrupted.
Figure \ref{fig:blure} compares the accuracy for novel clients with varying levels of blur and rotation corruption. \upflhn{} consistently out-performs all baselines.

\begin{figure}[t]
    \centering
    \includegraphics[width=1.0\linewidth]{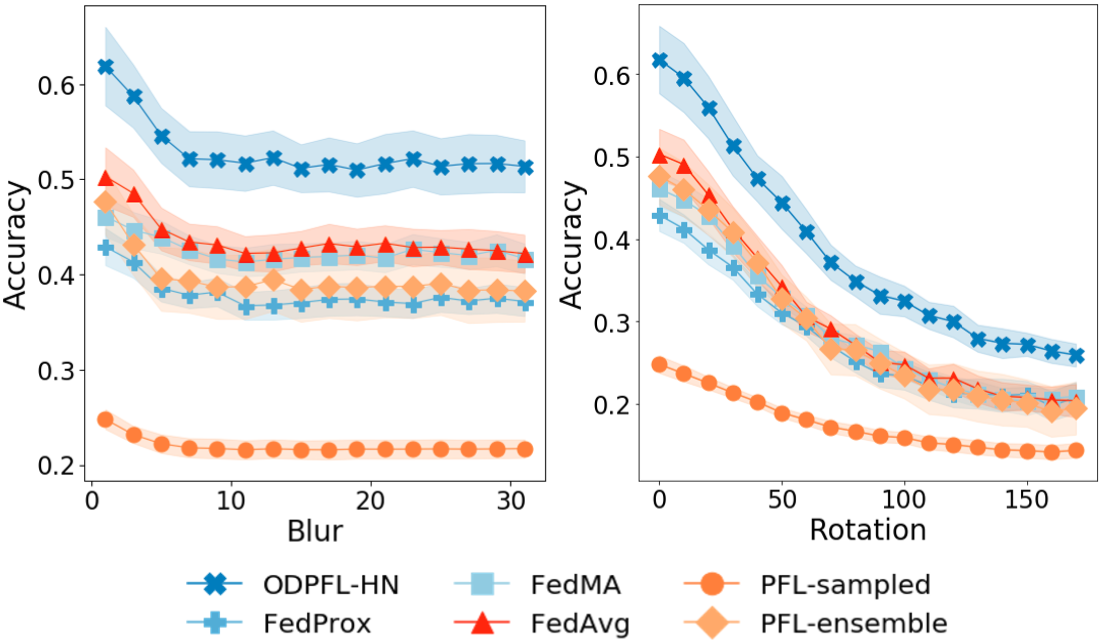}
    \caption{Test mean accuracy ($\pm$ S.E.M. over 10 clients) for CIFAR-10 novel clients corrupted with  blur using various size of Gaussian filters and rotation at various angles.
    }
    \label{fig:blure}
\end{figure}

We test \upflhn{} robustness to covariate shift, 
% we tested how it generalizes from CIFAR10 to STL10. 
by training an FL model on CIFAR10 (pathological split) and used STL10 as the data of the novel client. \upflhn{} achieves $43.1\%\pm4.1\%$, much better than all other baselines: FedProx $35.3\%\pm2.7\%$,  FedMA $32.9\%\pm2.5\%$,  FedAvg $34.7\%\pm4.1\%$,  PFL-sampled $17.6\%\pm0.6\%$, PFL-ensemble $28.1\%\pm1.9\%$ and PFL-nearest $31.9\%\pm3.8\%$.

\section{Differential Privacy}
A key aspect of FL is data privacy, since it does not require clients to share their data directly with the hub. Unfortunately, 
% since data is used to train the federated model, and the federated model produces models for other clients, 
some private information may be exposed \citep[see a recent survey by][]{mothukuri2021survey}. In this section, we analyze the privacy of a novel client and characterize how it can protect its privacy by applying differential privacy (DP) \citep{dwork2006calibrating}. We further study the trade-off between the privacy and accuracy of personalized models.

We first define key concepts and our notation. We use $(\epsilon,\delta)$-DP as defined by \citep{dwork2006our-GM}. Two datasets $D, D'$ are \textit{adjacent} if they differ in a single instance. 

\begin{definition}[]
A randomization mechanism $M: D \rightarrow R$ satisfies a $(\epsilon,\delta)$-differential privacy if for any two adjacent inputs $d, d' \in D$ and for any subset of outputs $S \subseteq R$ it holds that
$Pr[M(d) \in S] \leq e^{\epsilon} Pr[M(d') \in S]+\delta$.
\end{definition}
Here, $\epsilon$ quantifies privacy loss, where smaller values mean better privacy protection, and $\delta$ bounds the probability of privacy breach. The \textit{sensitivity} of a function $f$ is defined by $\Delta f = \max_{D,D'}||f(D)-f(D')||$, for two datasets $D$ and $D'$ that differ by only one element. 

\citet{dwork2006our-GM} showed that given a model $f$,
data privacy can be preserved by perturbing the output of the model and calibrating the standard deviation of the perturbation according to the sensitivity of the function $f$ and the desired level of privacy $\epsilon$. Intuitively, if the protected model is not very sensitive to changes in a single training element, one can achieve DP with smaller perturbations. 

Our focus here is to apply DP to a novel client that joins a pre-trained \upflhn{} model. Fortunately, since the novel client does not participate in training, the only information that a novel client shares with the server is the client descriptor. This descriptor is computed locally by the client, so applying DP to the encoder can protect its data privacy. 

Several mechanisms were proposed to achieve DP by adding noise. \citet{dwork2006our-GM} describes a Gaussian mechanism that adds noise drawn from a Gaussian distribution with $\sigma^2=\frac{2{\Delta f}^2\log(1.25/\delta)}{\epsilon^2}$. Our analysis focuses on the Gaussian mechanism, but the same method can be used with other noise mechanisms.

Let a novel client apply $(\epsilon,\delta)$-DP to the encoder using the Gaussian mechanism. The server sends the encoder $g$ to the client. The client then sends to the server $g(\{x_j^i\}_{j=1}^{m_i}) + \xi$ as its embedding, where $\xi$ is an IID vector from Gaussian distribution with $\sigma^2=\frac{2{(\Delta g)}^2\log(1.25/\delta)}{\epsilon^2}$.
To do that, the client must know the sensitivity of the encoder. The following lemma shows that for a DS encoder, we can bound the sensitivity of the encoder, hence bound the noise magnitude necessary to achieve privacy. See proof in Appendix \ref{app:dp}.
\begin{lemma}
\label{lemma:sensetivity}
Let $g$ be a deep-set encoder, written as: $g(D)=\psi({\frac{1}{|D|}\sum_{x \in D}\phi(x)}$). If $\psi$ is a linear function with Lipschitz constant $L_{\psi}$, and $\phi$ is bounded by $B_{\psi}$, then the sensitivity of the encoder is bounded by $\Delta g \leq \frac{2}{|D|} L_{\psi}B_{\phi}$.
\end{lemma}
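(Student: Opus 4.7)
The plan is to unwind the definition of sensitivity and exploit the compositional structure of the deep-set encoder. Let $D$ and $D'$ be two adjacent datasets of equal size $n=|D|$, differing in exactly one element: say $D = \{x_1,\dots,x_{n-1},x_n\}$ and $D' = \{x_1,\dots,x_{n-1},x_n'\}$. Writing $\mu(D) = \frac{1}{n}\sum_{x \in D}\phi(x)$, the encoder has the form $g(D)=\psi(\mu(D))$, so the difference $g(D)-g(D')$ only depends on how the pooled feature $\mu$ changes when we swap one point.

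The first step is to pull $\psi$ out of the norm. Since $\psi$ is linear with Lipschitz constant $L_\psi$, I would bound $\|g(D)-g(D')\| = \|\psi(\mu(D))-\psi(\mu(D'))\| \le L_\psi \|\mu(D)-\mu(D')\|$. Next, all terms in $\mu(D)$ and $\mu(D')$ cancel except those coming from $x_n$ and $x_n'$, so $\mu(D)-\mu(D') = \frac{1}{n}(\phi(x_n)-\phi(x_n'))$. Applying the triangle inequality and the assumption $\|\phi(x)\|\le B_\phi$ then gives $\|\mu(D)-\mu(D')\| \le \frac{1}{n}(\|\phi(x_n)\|+\|\phi(x_n')\|) \le \frac{2B_\phi}{n}$. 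Combining the two bounds yields $\|g(D)-g(D')\| \le \frac{2L_\psi B_\phi}{n}$, and taking the supremum over adjacent datasets gives the claimed bound on $\Delta g$.

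There is really no main obstacle here; the proof is essentially three inequalities composed in the right order. The only subtle point worth stating explicitly is what is meant by ``adjacent'' in this context: the definition in the paper refers to datasets that differ in a single instance, which for fixed-size sets means a substitution, as used above. If instead adjacency meant an add/remove operation (so $|D|$ and $|D'|$ differ by one), the denominators in $\mu(D)$ and $\mu(D')$ would no longer match and the calculation would need a small additional step rebalancing $\frac{1}{n}$ vs.\ $\frac{1}{n\pm 1}$; the resulting bound is of the same order but with a slightly different constant. I would briefly flag this distinction, but otherwise the proof is a direct chain of Lipschitz continuity, cancellation in the mean, and boundedness of $\phi$.
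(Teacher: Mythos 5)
Your proof is correct and follows essentially the same route as the paper's: isolate the single differing element so that all other terms in the mean cancel, leaving $\frac{1}{|D|}(\phi(d)-\phi(d'))$, then apply the Lipschitz property of $\psi$ and the bound $\|\phi\|\le B_\phi$ to get $\frac{2}{|D|}L_\psi B_\phi$. The only cosmetic difference is that the paper first uses linearity to pull the difference inside $\psi$ before invoking the Lipschitz constant, whereas you apply the Lipschitz bound directly to $\psi(\mu(D))-\psi(\mu(D'))$; for linear $\psi$ these are equivalent.
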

The lemma shows that the encoder sensitivity decreases linearly with the size of the novel client dataset $|D|$. For a given $(\epsilon,\delta)$, a lower sensitivity allows us to use less noise to achieve the desired privacy. This in turn means that the client can achieve better performance.

We now empirically evaluate the effect of adding Gaussian additive noise to the embedding of a novel client. 
To meet the conditions in lemma \ref{lemma:sensetivity}, we normalized the output of $\phi$ to be on a unit sphere, so $B_{\phi}=1$. In addition, we average the output of $\phi$, so $L_{\psi}=1$. We used $\delta=0.01$ and compared different values of $\epsilon$ and dataset sizes. 

Figure \ref{fig:dp-plot} shows that with sufficient data, a novel client can protect its privacy without compromising its performance. For example, given a desired privacy of $\epsilon=0.3$, if the client feeds the DP-encoder with $3000$ samples, the HN creates a personalized model from a perturbed embedding that is as accurate as a non-DP model. 
% Note that the novel client applies DP to the encoder in real-time, so each novel client may choose how much noise to add, taking into account the desired level of privacy, the important of the accuracy of the model and dataset size of that client.

\begin{figure}
    \centering
    \includegraphics[width=0.9\linewidth]{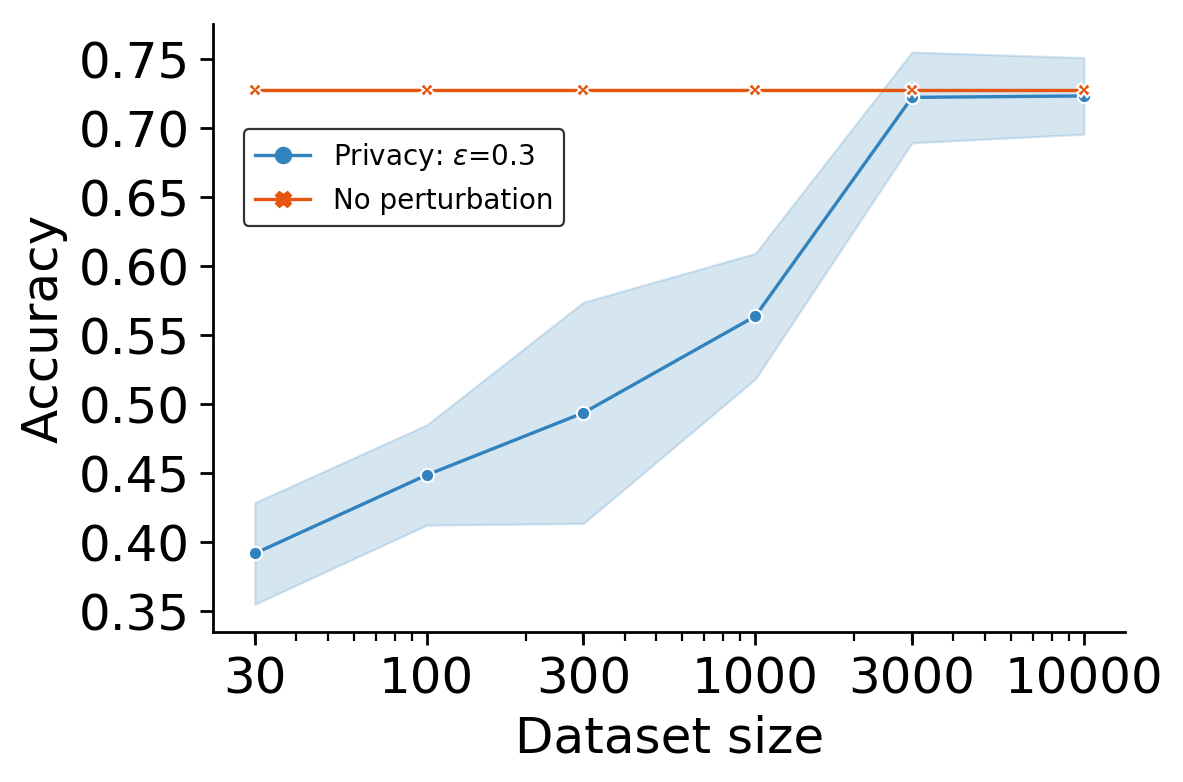}
    \caption{Test accuracy ($\pm$SEM) for CIFAR-10 novel client while applying DP. 
    % The blue curve depicts the accuracy of models that are designed for a privacy level of $\epsilon=0.3$. 
    The blue curve depicts the accuracy of models with privacy level of $\epsilon=0.3$.
    As the size of the dataset increases, model accuracy improves while preserving the privacy level. This is because smaller perturbations are sufficient to achieve the required differential privacy.
    }
    \label{fig:dp-plot}
\end{figure}

\section{Conclusion}
This paper describes a new real-world FL setup, where a model trained in a FL workflow is transferred to novel clients whose data are not labeled and were not available during training.
We describe \upflhn{}, a novel approach to \upfl{}, based on an encoder that learns a space of clients and an HN that maps clients to corresponding models in an ``on-demand" way. We evaluated \upflhn{} on five benchmark datasets, showing that it generalizes better than current FL and modified PFL methods. We also analyze and bound the generalization error for a novel client and analyze applying DP for the novel client. 
% We hope that this paper will encourage the research community to consider generalization to novel clients when designing FL methods.

\newpage

\bibliography{cite}

% \newpage
\clearpage

\appendix

% \section{Appendix}

\section{A Generalization Bound}
\label{app:generalization-blund}

\begin{proof}
The main idea of the proof is to first bound the error of all (labeled) training clients, using results from multitask learning. Then, treat the novel client as a target domain in a domain adaptation problem. This allow us to use results from domain adaptation to bound the novel client error.

The error of a hypothesis $h$ over a distribution $P$ is defined by $err_{P}(h) = \int_{X \times Y} l(h(x),y) dP(x,y)$.
The error of the novel client for a given hypothesis \textbf{space} $H$ is defined by %$\inf_{h \in H} err_{P_{new}}(h)$. 
\begin{equation}
\label{eq:erT0}
\begin{split}
    err_{P_{new}}(H) := \inf_{h \in H} err_{P_{new}}(h)
\end{split}
\end{equation}

Since $P_{new}$ is independent of $Q$, we can integrate over all $P\sim Q$ and obtain

\begin{equation}
\label{eq:erT}
\begin{split}
    err_{P_{new}}(H) = \int_{P} \inf_{h \in H} err_{P_{new}}(h) dQ(P).
\end{split}
\end{equation}

Using Theorem 2 from \citep{ben2010theory} %(\ref{eq:BD}), 
with $P_{new}$ treated as the target domain and $P$ as the source domain, gives that $\forall h, \forall P : err_{P_{new}}(h) \le err_{P}(h) + \frac{1}{2} \hat{d}_{H\Delta H}(P,P_{new})$. Plugging into Eq.~(\ref{eq:erT}) gives
\begin{eqnarray}
\label{eq:erT2}
\begin{split}
    &err_{P_{new}}(H) \le \\
    &\int_{P} \inf_{h \in H} \left[err_{P}(h) + \frac{1}{2} \hat{d}_{H\Delta H}(P,P_{new})\right] dQ(P) \\
    &= err_{Q}(H)+\frac{1}{2} \int_{P} \inf_{h \in H} \hat{d}_{H\Delta H}(P,P_{new}) dQ(P).
\end{split}
\end{eqnarray}
Since $err_{Q}(H)$ is unknown, we use Theorem 2 from \cite{baxter2000model} to bound the error of the novel client. That yields the bound in the theorem.
\end{proof}

%, the generalization error for the novel client is bound. We start with explaining the notation used \cite{baxter2000model} by for multi-task learning and the notation from \cite{ben2010theory} for domain adaptation and then prove the generalization bound.

% \gal{Ohad, this is textbook/tutorial material, not really the right place} 
Now we explain this in detail. 
% In the context of MTL and DA, the ``vanilla'' supervised learning setup, can be viewed as 
If we have only one task and one domain, the most common solution is to find $h\in \myH$ that minimizes the loss function on a training set sampled from probability distribution $P$. In general, $\myH$ is a hyperparameter defined by the network architecture. \cite{blumer1989learnability} shows that in this simple case the generalization error is bounded. The bound depends on the “richness” of $H$. Choosing a ``rich" $H$ (with large VC-dimension), increase the generalization error. 

In \upfl{}, each client may use a different hypothesis. Instead of bounding the error of one chosen hypothesis, we bound the error of the chosen hypotheses space
% \gal{You need to explain first what is a chosen hypothesis space, and what does it mean that a client chose from it}, 
that each client chooses from. In this way, we can bound the error of a novel client, without assuming anything about the way it chose from the hypothesis space.

First, we find $H$ that is "rich" enough to contain hypotheses that can fit all data of the clients. Second, for each client, we select the best hypothesis $h \in H$ according to the client data. We define $Q$ as a distribution over $P$, so, each client sample from $Q$ a distribution $P_i$. We further define $\mathbb{H}$ as a hypothesis space family, where each $H \in \mathbb{H}$ is a set of functions $h: X\rightarrow Y$.

The first goal is to find a hypothesis space $H \in \mathbb{H}$ that minimizes the weighted error of all clients, assuming each client uses the best hypothesis $h \in H$. We define this error using the following loss: 
\begin{equation}
    err_{Q}(H) := \int_{P} \inf_{h \in H} err_{P}(h) dQ(P)    
\end{equation}
while $err_{P}(h) := \int_{X \times Y} l(h(x),y) dP(x,y) $. In practice, $Q$ is unknown, so we can only estimate $err_{Q}(H)$ using the sampled clients and their data. 

For each client $i=1..n$, we sample the training data of the client from $X \times Y \sim P_i$. We denote the sampled training set by $z_i := {(x_1,y_1),...,(x_m,y_m)}$, and $z={z_1,...z_n}$.
The empirical error of a specific hypothesis is defined by 
$\hat{er}_{z}(h) := \frac{1}{m} \sum_{i=1}^{m} l(h(x_i),y_i)$. In training we minimize the empirical loss

\begin{equation}
    \hat{er}_{z}(H) := \frac{1}{n} \sum_{i=1}^{n} \inf_{h \in H} \hat{er}_{z_i}(h)
\end{equation} 

\citet{baxter2000model} shows that if the number of clients n satisfies $n \ge max\{\frac{256}{\epsilon^2} log(\frac{8C(\frac{32}{\epsilon},H^*)}{\delta}), \frac{64}{\epsilon^2}\}$, and the number of samples per client $m$ satisfies $m \ge max\{\frac{256}{n\epsilon^2} log(\frac{8C(\frac{32}{\epsilon},H^n_l)}{\delta}), \frac{64}{\epsilon^2}\}$, then with probability $1-\delta$ all \allH 
satisfies 
\begin{equation}
\label{eq:baxster}
    err_{Q}(H) \le \hat{er}_{z}(H) +\epsilon
\end{equation}

were $C(\frac{32}{\epsilon},H^n_l)$ and $C(\frac{32}{\epsilon},H^n_l)$ are the covering numbers defined in \cite{baxter2000model}, and can be referred to as a way to measure the complexity of $H$. Note that a very ``rich" $H$ makes $\hat{er}_{z}(H)$ small, but increases the covering number, so for the same amount of data, $\epsilon$ increases.

For the PFL setup, this is enough, since we can ensure that for a client that sampled from $Q$ and was a part in the federation, the chosen \ah has an error close to the empirical one $\hat{er}_{z}(H)$. For a novel client, this may not be the case. The novel client may sample from a different distribution over P. In the general case, the novel client may even have a different distribution over $X \times Y$. In the most general case, the error on the novel client cannot be bound. In DA, a common distribution shift is a covariate shift, where $P(x)$ may change but $P(y|x)$ remains constant. This assumption lets us bound the error of the novel client.

\cite{ben2010theory} shows that for a given \allH, if S and T are two datasets with $m$ samples, then with probability $1-\delta$, for every \ah:
\begin{equation}
\label{eq:BD}
\begin{split}
    err_T(h) \le &err_S(h) + \frac{1}{2} \hat{d}_{H\Delta H}(S,T) \\ 
    &+4\sqrt{\frac{2d\  log(2m)+log(\frac{2}{\delta})}{m}} + \lambda
\end{split}
\end{equation}
where $err_D(h)=E_{(x,y)~D}[|h(x)-y|]$ is the error of the hypothesis on the probability distribution of the domain $D$. $\hat{d}_{H\Delta H}(S,T)$ is a distance measure between the domains S and T, and $\lambda = \argmax_{h \in H} err_T(h) + err_S(h)$. Note that for over-parametrized models like deep neural networks $\lambda$ should be very small. To keep the analysis shorter we assume this is the case. We also assume that $m$ is large enough to neglect $4\sqrt{\frac{2d\  log(2m)+log(\frac{2}{\delta})}{m}}$. These assumptions are not mandatory, and the following analysis can be performed without them. This allows us to treat $P_{new}$ as the target domain and $P$ as the source domain in Eq.~(\ref{eq:erT2})

\section{Additional Details about Training}
\label{appendix:train-detail}

\subsection{Encoding batches of a dataset}
We encode the whole dataset by feeding a large batch to the encoder. We also tested an alternative approach that can be applied to large datasets that do not fit in a single batch in memory. In these cases, we randomly split the data into smaller batches, encoded each batch, and used the average over batch descriptors as the final descriptor. When we tested this approach for several batch sizes, we did not observe that the performance was consistently sensitive to batch size. Encoding the full dataset in a single batch did perform better in most scenarios tested.

\subsection{Training in two phases}
In \upflhn{} two main components are trained using a federation of labeled clients: A hypernetwork and a client encoder. We train them together end to end. When training the client encoder, batches from the same client yield different representations, and we found that this variability might hurt training end to end. We design 2 approaches to alleviate this issue: (1) Calculating the embedding using all client data, without breaking it into mini-batches. (2) Training in two steps by first learning an embedding of each training client,  namely,  a mapping from a client identity $i$ to a dense descriptor $e_i$. This embedding layer was trained in a standard way jointly with the hypernetwork. Then, we trained the client encoder. We tested both methods, and choose between them using cross validation. Practically, for the iNaturalist experiments, the second method was better. For all other datasets, the first method was better. We now explain the second approach in detail.

Training the client encoder and the HN in two phases was done in the following way:
The hypernetwork optimizes the $L_{HN}$ loss defined in \ref{eq:hn-loss} by updating both its own weights $\theta$ and client representations $\{e_k\}_{k=1}^{N}$.
\begin{equation}
L_{HN}(\theta,e_1,...,e_N) = \sum_{i=1}^{n} \sum_{j=1}^{m_i} l(f_{\theta}(e_i)(x_j^i),y_j^i) 
\label{eq:hn-loss}
\end{equation}

The client encoder trains to predict the representations learned by the hypernetwork from raw client data by minimizing $L_{encoder}$ defined in \ref{eq:encoder-loss}. At inference time, a novel client feeds its data to the client encoder and gets an embedding vector. Then, feeding the embedding vector to the hypernetwork produces a custom model for the client.

\begin{equation}
L_{encoder}=\sum_{i=1}^{n}L_2(g_{\gamma}(\{x_j^i\}_{j=1}^{m_i}),e_i)
\label{eq:encoder-loss}
\end{equation}
%%%%%%%%%%%%%%%%%%%%%%%
In detail, in each communication step: (1) The server selects a random client and feeds its embedding $e_i$ to the HN to create the personal model $h_i=h(\cdot,w_i)$. (2) The servers sends  $h_i$, $e_i$, and the current encoder $g_{\gamma}$ to the client. (3) The client then locally trains this network on its data and communicates the delta between the weights before and after training $\Delta w_i$ back to the server. Using the chain rule, the server can train the hypernetwork and the embedding layer itself. (4) The client trains the encoder locally using the current given embedding $e_i$  by optimizing $L_{encoder}$, then, the updates of the encoder are sent back to the server for aggregation. Inference is done in the same way as in end-to-end training.

Up to this point, the client encoder trains in parallel to the hypernetwork and has no influence on the hypernetwork weights or the embeddings of the labeled clients. We found that freezing the encoder and fine-tuning the hypernetwork using the trained encoder predictions improve the results of our method. This is done by optimizing the hypernetwork parameters $\theta$ using $L_{Fine-tune}$. 
\begin{equation}
    L_{Fine-tune}(\theta) = \sum_{i=1}^{n} \sum_{j=1}^{m_i}l(f_{\theta}(g_{\gamma}(\{x_j^i\}_{j=1}^{m_i}))(x_j^i),y_j^i)
\end{equation}
However, this fine-tuning step reduces the performance of labeled clients. Note that in a real-world application, the server may save a version of the hypernetwork before fine-tuning it and use it when generating models for the original federation.

\section{Experimental Details}
\label{appendix:exp-detail}

For all experiments presented in the main text, we use a fully connected hypernetwork with 3 hidden layers of 100 hidden units each. The size of the embedding dimension is $\frac{N_{clients}}{4}$. Experiments are limited to 500 communication steps. In each step, communication is done with $0.1\cdot N_{train}$.
\paragraph{Hyperparmeter Tuning} We divide the training samples of each training client into 85\% / 15\% train / validation sets. The validation sets are used for hyperparameter tuning and early stopping of all baselines and datasets. The hyperparameters searched and the corresponding values by method: \textbf{FedAVG}: The local momentum $\mu_{local}$ is set to 0.5. We search over local learning-rate  $\eta_{local} \in \{1e-1, 5e-2, 1e-2, 5e-3, 1e-3\}$, number of local epochs $K \in \{1, ,2 ,5, 10\}$ and batch size $\{16, 32, 64\}$.
\textbf{FedProx} and \textbf{FedMA}: We used the hyperparameters used by \cite{wang2020federated} in the official code that provided by the authors. 
\textbf{pFEdHN}: We set $\mu_{local}=0.9$. We search over learning-rates of the hypernetwork, embedding layer and local training: $\eta_{hn}, \eta_{embedding}, \eta_{local} \in \{1e-1, 5e-2, 1e-2, 5e-3, 1e-3\}$, weight decays $wd_{hn}, wd_{embedding}, wd_{local} \in \{1e-3, 1e-4, 1e-5\}$, number of local epochs $K \in \{1, ,2 ,5, 10\}$ and batch size $\{32, 64\}$. \textbf{\upflhn{}}: We perform the optimization using the same parameters and values as in pFEdHN. In addition, we search for the learning rate of the client encoder $\eta_{encoder} \in \{1e-1, 5e-2, 1e-2, 5e-3, 1e-3\}$.

\paragraph{CIFAR(Section 6.3)} 
We use a LeNet-based target network with two convolution layers with 16 and 32 filters of size 5 respectively. Following these layers are two fully connected layers of sizes 120 and 84 that output logits vector. The client encoder follows the same architecture with an additional fully connected layer of size 200 followed by Mean-global-pooling for the first 100 units and Max-global-pooling for the other 100 units. Global pooling is done over the samples of a batch.

\paragraph{iNaturalist, Landmarks and Yahoo Answers Data(Sections 6.4-6.6)} We use a simple fully-connected network with two Dense layers of size 500 each, followed by a Dropout layer with a dropout probability of 0.2. The client encoder is a fully-connected network with three Dense layers of size 500. The first layer is followed by Mean Global Pooling for the first 250 units and Max Global Pooling for the other 250 units.

% \subsection{The effect of domain shift}
% \begin{figure}
%     \centering
%     \includegraphics[width=\linewidth]{}
%     \caption{\textbf{Accuracy of novel clients vs. distribution shift between the novel client and training clients}. For each test client, we calculate its KL-divergence to the nearest train client. Accuracies of novel clients are reported against the KL-divergence from the nearest train client for each method. Left: Shown results for CIFAR-100 across multiple splits to $N_{train}=90$ train clients and $N_{novel}=10$ novel clients using symmetric Dirichlet distributions with varying parameter $\alpha$ (see \ref{dist-shift}). Right:  Shown results for iNaturalist with the Geo-300 split to $N_{train}=1087$ train clients and $N_{novel}=121$ novel clients.}
%     \label{fig:joint_kl}
% \end{figure}

\section{Differential Privacy}
\label{app:dp}
\begin{proof}

\begin{equation}
\label{eq:encoder-sensitivity}
\begin{split}
    \Delta g &:= \max_{D,D'}||g(D)-g(D')|| \\
    &= \max_{D,D'}||\psi({\frac{1}{|D|}\sum_{x \in D}\phi(x))} - \psi({\frac{1}{|D'|}\sum_{x \in D'}\phi(x)})||.
\end{split}
\end{equation}
Denote $d \in D$ and $d' \in D'$ as the only nonidentical instance between $D$ and $D'$, so $D/d = D'/d'$. Then 
\begin{eqnarray}
\label{eq:encoder-sensitivity2}
\begin{split}
    \Delta g =& \max_{D,D'}||\psi\left({\frac{1}{|D|}[\sum_{x \in D/d}\phi(x)+\phi(d)]}\right) \\ 
    &- \psi\left({\frac{1}{|D'|}[\sum_{x \in D'/d'}\phi(x)+\phi(d')]}\right)|| \\
    =& max_{d,d'}\frac{1}{|D|}||\psi\left(\phi(d)-\phi(d')\right)||
\end{split}
\end{eqnarray}
Assume that $\phi$ is bounded by $B_{\phi}$, so $|\phi(d)-\phi(d')|<2B_{\phi}$. Then from the linearity of $\psi$:
\begin{equation}
\label{eq:encoder-sensitivity3}
    \Delta g \leq \frac{1}{|D|} L_{\psi} |\phi(d)-\phi(d')|\leq \frac{2}{|D|} L_{\psi}B_{\phi}
\end{equation}
\end{proof}

We also compare different levels of privacy, using lower values of $\epsilon$. Figure \ref{fig:dp-full} shows that more privacy (lower $\epsilon$), requires a larger dataset to maintain the same accuracy of the model.

\begin{figure}
    \centering
    \includegraphics[width=1\linewidth]{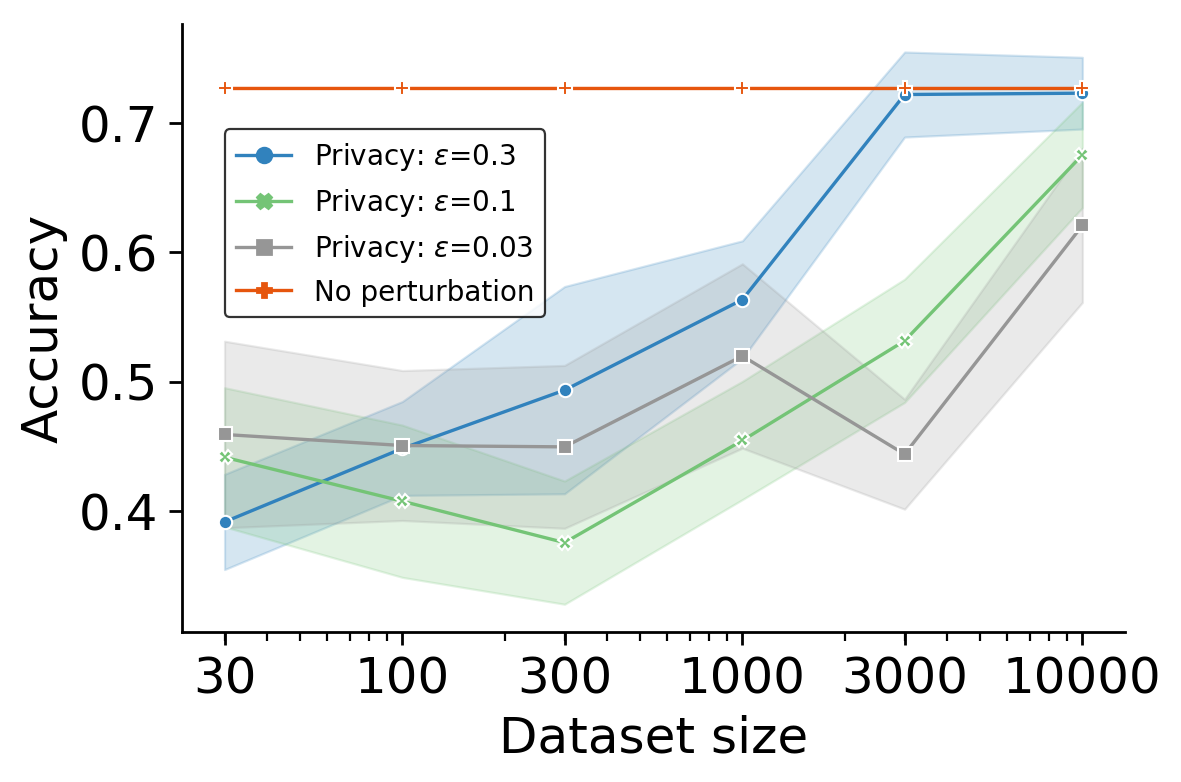}
    \caption{Test accuracy ($\pm$SEM) for CIFAR-10 novel client while applying DP. As $\epsilon$ decreases, we need more data to preserve the same accuracy of the model. 
    }
    \label{fig:dp-full}
\end{figure}

\section{Embedding Space Visualization}
To get intuition for the way the client encoder captures similarities between clients, we wish to visualize the space of client embeddings $\myE{}$. Since each client involves samples from different classes, it is somewhat hard to visualize which client should be close.  To capture some of those similarities, we mark clients that contain samples from some class with that class number. Figure \ref{fig:tsne} shows all clients that have samples from class \#2 (left) or \#9 (right). Clients that share a class tend to be located closer. Note that those clients have another class that is not shown, so they are not expected to fully cluster together. 

% we plot a dot for each client, and  we sign each class with a different marker. We plot each client using the marker of its higher class.
% Figure \ref{fig:tsne} shows that some areas are related with some labels. For example, the area marked with an ellipse contains mostly clients who have 8 and 9 as their higher label. Note that those client have another class, so it is not expected to contain all clients with the label 8 or 9.

\begin{figure}
    \centering
    \includegraphics[width=1\linewidth]{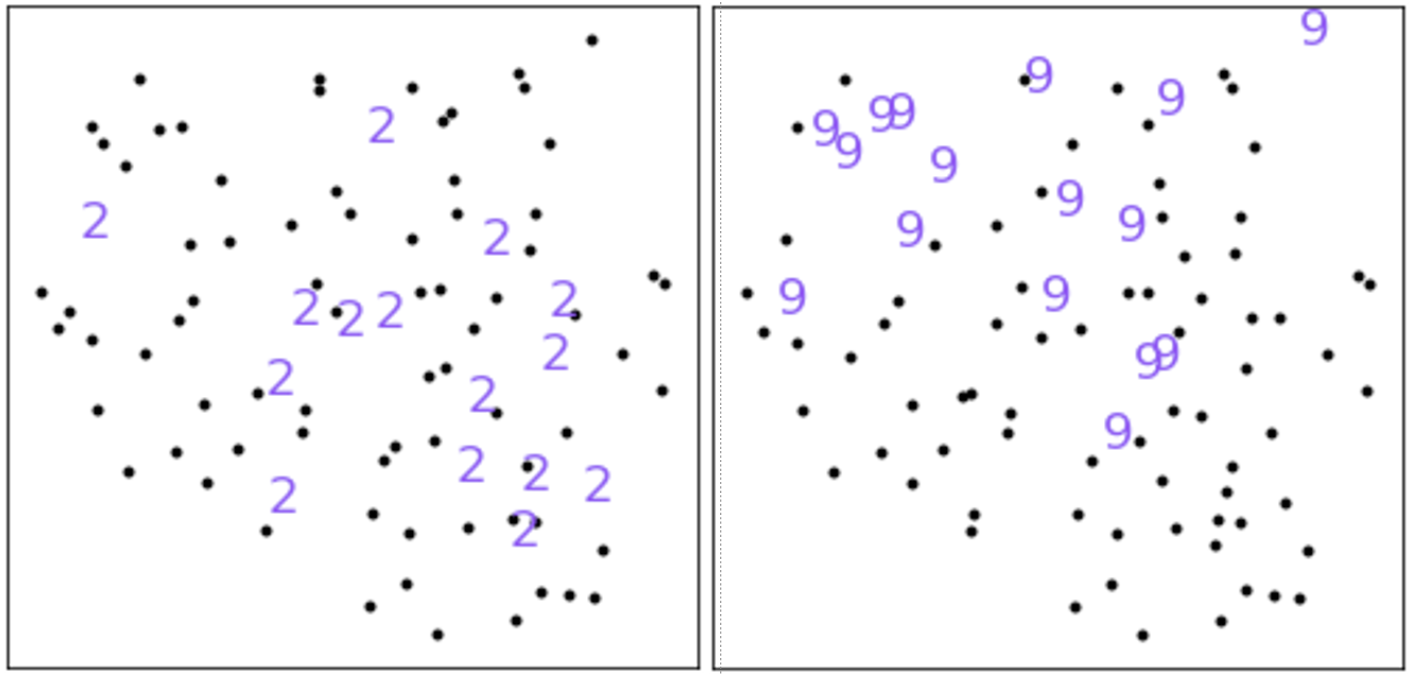}
    \caption{A tSNE plot of the embedding space for the pathological split of CIFAR10. In this setting, each client has a task of categorizing samples from two classes. Each dot and digit correspond to one client.
    The digits 2 marks clients that had samples from class 2; specifically (2,1), (2,3) (2,4) etc. Same for the digit 9. Similar plots can be made for the remaining classes. Clients involving the same class tend to be closer to each other (2 on the right, 9 at the top).}
    \label{fig:tsne}
\end{figure}

\end{document}